\newtheorem{theorem}{Theorem}
\newtheorem{lemma}{Lemma}
\newcommand{\oea}{\mbox{$(1 + 1)$~EA}\xspace}
\newcommand{\mea}{\mbox{$(\mu + 1)$~EA$_D$}\xspace}
\newcommand{\mmea}{\mbox{$(1_\mu + 1_\mu)$~EA$_D$}\xspace}
\newcommand{\mlea}{\mbox{$(\mu + \lambda)$~EA$_D$}\xspace}
\newcommand{\onemax}{\textsc{OneMax}\xspace}
\newcommand{\leadingones}{\textsc{LeadingOnes}\xspace}
\newcommand{\R}{\mathbb{R}}
\newcommand{\Popt}{P_{\text{opt}}}
\DeclareMathOperator{\Geom}{Geom}
\DeclareMathOperator{\Poly}{Poly}
\DeclareMathOperator*{\argmin}{argmin}
\begin{document}

\author{Denis Antipov
\\Optimisation and Logistics\\
School of Computer Science
\\The University of Adelaide
\\Adelaide, Australia
\And
Aneta Neumann
\\Optimisation and Logistics\\
School of Computer Science
\\The University of Adelaide
\\Adelaide, Australia
\And
Frank Neumann
\\Optimisation and Logistics\\
School of Computer Science
\\The University of Adelaide
\\Adelaide, Australia
}

\title{Local Optima in Diversity Optimization: Non-trivial Offspring Population is Essential}

\maketitle

\begin{abstract}
    The main goal of diversity optimization is to find a diverse set of solutions which satisfy some lower bound on their fitness. Evolutionary algorithms (EAs) are often used for such tasks, since they are naturally designed to optimize populations of solutions. This approach to diversity optimization, called EDO, has been previously studied from theoretical perspective, but most studies considered only EAs with a trivial offspring population such as the $(\mu + 1)$~EA. In this paper we give an example instance of a $k$-vertex cover problem, which highlights a critical difference of the diversity optimization from the regular single-objective optimization, namely that there might be a locally optimal population from which we can escape only by replacing at least two individuals at once, which the $(\mu + 1)$ algorithms cannot do.

    We also show that the $(\mu + \lambda)$~EA with $\lambda \ge \mu$ can effectively find a diverse population on $k$-vertex cover, if using a mutation operator inspired by Branson and Sutton (TCS 2023). To avoid the problem of subset selection which arises in the $(\mu + \lambda)$~EA when it optimizes diversity, we also propose the $(1_\mu + 1_\mu)$~EA$_D$, which is an analogue of the $(1 + 1)$ EA for populations, and which is also efficient at optimizing diversity on the $k$-vertex cover problem.
\end{abstract}

\keywords{Diversity Optimization \and Population-based Algorithms \and Theory \and Landscape Analysis \and Vertex Cover}

\section{Introduction}
\label{sec:intro}

Obtaining a diverse set of good solutions is a complex optimization task, which often arises in real-world problems such as planning~\cite{DBLP:conf/aaai/KatzS20}, satisfiability~\cite{DBLP:conf/gecco/NikfarjamR0023}, architectural planning~\cite{GielL10}, cutting materials~\cite{Haessler1991CuttingSP} and others. The most common reason for the need of a diverse set of solutions is that some objectives or constraints cannot be strictly formalized (e.g., for political, ethical, aesthetic or other reasons), therefore an algorithm user would like to get not a single best solution, but a set of good solutions to choose from. And if this set is not diverse enough, all solutions might occur infeasible in terms of those non-formalizable constraints.

Formalizing diversity is also a non-trivial task, and often it is problem-specific. One of the ways to get a set which can be called diverse is to divide the search space into regions and to optimize the objective (or objectives) in each region simultaneously \cite{CullyM13,DBLP:journals/corr/MouretC15}. This approach, called the quality diversity (or QD for brevity), has been mainly developed in the domains of robotics and games~\cite{DBLP:conf/gecco/PughSSS15,gravina2019procedural,DBLP:conf/gecco/ZardiniZZIF21,chatzilygeroudis2021quality,mkhatshwa2023impact,medina2023evolving}. Recently this
approach has been applied to the traveling thief problem, and various domains such as design and health \cite{nikfarjam2022use,macedo2024evolving,QDATUO_2024}.

Another way to formalize the problem, which we adopt in this paper, is to define a diversity measure over the space of sets of solutions, and turn the problem into optimizing this measure under some constraints on the quality of solutions in the population.

The popularity of this problem has attracted a lot of attention from the algorithmic community, which resulted in developing multiple approaches from deterministic ones~\cite{DBLP:journals/ai/BasteFJMOPR22,DBLP:conf/esa/KellerhalsRZ21} to using various randomized search heuristics~\cite{DBLP:conf/aaai/IngmarBST20,DBLP:conf/ecai/Benke0PL23}. 
One of the most efficient approaches is using evolutionary algorithms (EAs), which is called \emph{evolutionary diversity optimization} or EDO for brevity. It has been used in many applications, including classical combinatorial problems such as the traveling salesperson problem~\cite{DBLP:conf/gecco/NikfarjamBN021,DBLP:journals/telo/DoGNN22}, the traveling thief problem~\cite{DBLP:conf/gecco/NikfarjamN022a}, the computation of minimum spanning trees~\cite{DBLP:conf/gecco/Bossek021}, submodular problems~\cite{DBLP:conf/gecco/NeumannB021} and related communication problems in the area of defense~\cite{DBLP:conf/gecco/NeumannGYSCG023,EDOSDCN}.

The reason behind EDO's spread is that in diversity optimization the aim is to find not a single solution, but a set of solutions, which most optimization algorithms struggle with due to the increased dimensionality of the problem. The EAs, however, have been naturally designed to evolve a population of solutions, hence they do not need much adaptation for the diversity-oriented problems. 

The theoretical foundations of EDO have been established by studying this approach on benchmark functions such as \onemax or \leadingones~\cite{DBLP:conf/gecco/GaoN14}, as well as on permutation problems without any constraint on the fitness of the solutions~\cite{DBLP:journals/telo/DoGNN22}. It has also been studied on submodular functions in~\cite{DBLP:conf/gecco/NeumannB021,DBLP:conf/ijcai/DoGN023}. In all cases upper bounds on the runtime were shown, indicating that the runtime is always finite. It does not appear to be a surprising statement, since most EAs are designed in such a way that they have a finite runtime, e.g., by using standard bit mutation which guarantees a non-zero probability for generating the global optimum in any generation. For this reason in single-objective optimization EAs with a non-trivial parent population such as the $(\mu + 1)$~EA can always find an optimum (probably in ridiculously large $n^n$ time as on the Trap function). 

In this paper we demonstrate that in EDO the standard bit mutation does not guarantee a finite runtime. We give an example of a $k$-vertex cover instance, where none of the algorithms using the $(\mu + 1)$ scheme (that is, the elitist EAs which have a non-trivial parent population and which generate only one offspring per generation) can find the global optimum in finite time. This indicates that in optimization of some diversity measure over a set of populations replacing only one solution is similar to performing a one-bit mutation in single-objective problems, which might get the process stuck in a local optimum on any non-monotone function. 

On the positive side, we show that the \mlea with $\lambda \ge \mu$ can always find an optimal population by generating all individuals from a population with optimal diversity in one iteration. Since selecting a subset with optimal diversity might be a non-trivial task for the \mlea, we also propose the \mmea, which is an analogue of the \oea for population space. We use a jump-and-repair mutation operator from~\cite{DBLP:journals/tcs/BransonS23} and show that the \mlea with $\lambda \ge\mu$ and the \mmea using this operator can find an optimally diverse set of $k$-covers on a graph with $n$ vertices in expected number of at most $2^{k + \mu}(1 - o(1))$ iterations.\footnote{As in the vast majority of theoretical studies, we focus on estimating the number of iterations rather than the wall-clock time.} We note that since the main aim of the diversity optimization is to give a decision-maker (usually, a human) a diverse set of solutions, then $\mu$ is preferred to be not too large, e.g., constant. Therefore, if we also take $k = O(\log(n))$, then the runtime will be polynomial.

The rest of the paper is organized as follows. In Section~\ref{sec:prelims} we formally define the $k$-vertex cover problem, as well as the diversity measures and the algorithms we study in this paper. Section~\ref{sec:local-opt} gives an example of a graph and a population of $k$-vertex covers on this graph which is locally optimal in the space of populations. We then perform a runtime analysis of the \mmea showing its efficiency on this problem in Section~\ref{sec:mmea}. A discussion of our results concludes the paper in Section~\ref{sec:conclusion}.

\section{Preliminaries}
\label{sec:prelims}

\subsection{Diversity Optimization}

The problem of diversity optimization was defined in~\cite{DBLP:conf/gecco/UlrichT11}. We slightly generalize it as follows. Given a fitness function $f: \Omega \mapsto \R$ (where $\Omega$ is the search space), population size $\mu$, quality threshold $B$ and diversity measure $D$, the goal is to find a population $P$, which is a multiset of $\mu$ elements from $\Omega$, with the best value of $D(P)$ under condition that all $x \in P$ meet the quality threshold, that is, $f(x) \ge B$.

Usually the definition of the diversity measure is problem-specific, but it always reflects how different the solutions are in the search space (but not in the fitness space). In this paper we study the \emph{total Hamming distance}, a diversity measure which has been previously studied in the context of theoretical runtime analysis of EDO~\cite{DBLP:conf/gecco/GaoN14,DBLP:conf/gecco/DoerrGN16,DBLP:conf/foga/Antipov0N23}. Given a population $P$ of bit strings, the total Hamming distance is defined as $D(P) = \sum_{x,y \in P} H(x, y)$, where $H$ is the Hamming distance and the sum is over all unique unordered pairs of individuals. 

An important property of this measure for pseudo-Boolean optimization (that is, when the search space is the set $\{0, 1\}^n$ of all bit strings of length $n$) which has been previously used in the analysis of EDO algorithms is that it can be computed through the number of one-bits in each position in the population. It was first shown in~\cite{DBLP:conf/gecco/WinebergO03a}, and later was used in many works which studied this diversity measure. Let $n_i$ be a number of ones in position $i$ over all individuals in population $P$. Then
\begin{align}\label{eq:hamming}
    D(P) = \sum_{i = 1}^{n} n_i (|P| - n_i).
\end{align}
This property implies that the contribution of position $i$ into diversity is maximized, when $n_i = \lceil \frac{|P|}{2} \rceil$ or $n_i = \lfloor \frac{|P|}{2} \rfloor$. The maximum diversity is obtained when all $n_i$ are either $\lceil \frac{|P|}{2} \rceil$ or $\lfloor \frac{|P|}{2} \rfloor$. However it is not always possible due to the constraints on the fitness of the solutions.

\subsection{Vertex Cover}
\label{sec:vc}

For a given undirected graph $G = (V, E)$ we call any set of vertices such that all edges in $E$ are adjacent to at least one vertex in this set a \emph{vertex cover} (or \emph{cover} for short). The \emph{minimum vertex cover} is a problem of finding a cover of minimum size, and the \emph{$k$-vertex cover} is a fixed-target\footnote{For more information about fixed-target analysis and notation see~\cite{DBLP:journals/algorithmica/BuzdalovDDV22}.} variant of this problem, that is, the problem of finding a cover of a size of at most $k$.

Finding a $k$-vertex cover for an arbitrary graph and an arbitrary $k$ is a classic NP-hard problem, and therefore, there is no known algorithm which could solve this problem efficiently (that is, in a polynomial time). For this reason the EAs have been previously applied to it in many different ways. In~\cite{DBLP:conf/cec/OlivetoHY07,DBLP:conf/cec/OlivetoHY08} it has been shown that the classic EAs can be very ineffective on some instances of the vertex cover, when they use a single-objective formulation. Hybrid evolutionary approaches have been studied in~\cite{DBLP:journals/ec/FriedrichHHNW09}, and in~\cite{DBLP:journals/ec/FriedrichHHNW10,DBLP:journals/algorithmica/KratschN13} an effectiveness of the multi-objective approach has been shown. A typical representation of the vertex cover when applying EAs is a bit string of length $n = |V|$, where the $i$-th bit indicates if the $i$-th vertex is included into the set. We use this representation in this paper.  

It is also well-known that the $k$-vertex cover is a fixed parameter tractable (FPT) problem~\cite{DBLP:journals/algorithmica/KratschN13}, that is, there exists a parameter of the instance $k$ such that the time we need to optimize the instance is $f(k) \cdot \Poly(|V|)$. In our case, this parameter $k$ is the size of the optimal cover.
To address the FPT property of this problem, in~\cite{DBLP:journals/tcs/BransonS23} Branson and Sutton used a modified representation for individuals and proposed a jump-and-repair mutation operator which allowed the \oea to find a $k$-vertex cover in expected number of $O(2^kn^2\log(n))$ iterations, if such cover exists. The main idea behind that operator is that if there exist a vertex cover $y$ of size at most $k$ such that none of the vertices can be removed from it, then we can get it from any other vertex cover $x$ by removing all vertices belonging to $x \setminus y$ and adding their neighbours (see Lemma 4 in~\cite{DBLP:journals/tcs/BransonS23}).

In this work we aim at finding a diverse (in terms of the total Hamming distance) set of vertex covers of size at most $k$ for a given graph $G = (V, E)$, assuming that at least one such cover exists. We also assume that the target population size $\mu$ and the cover size $k$ are relatively small, namely that $k\mu = o(\sqrt{n})$. This implies that by the pigeonhole principle, in any population there will be only $o(\sqrt{n})$ different positions which have at least one one-bit, and therefore there will be many positions $i$, in which all individuals would have a zero-bit (that is, vertex $i$ is not included in any cover in the population). If a population has some individuals which have $\ell < k$ vertices, than adding $k - \ell$ vertices not included into any individual in the population will increase the corresponding terms in eq.~\eqref{eq:hamming} by $|P| = \mu$, hence it never makes sense to have covers of size less than $k$ in the population. 

However, the mutation operator used in~\cite{DBLP:journals/tcs/BransonS23} cannot generate all covers of size $k$, but only \emph{non-excessive} ones, that are, those covers from which we cannot remove any vertex and keep it a cover. For this reason, in this work we modify their mutation operator.
Our modified operator is shown in Algorithm~\ref{alg:jump-and-repair}. Given a cover $x$, this jump-and-repair mutation first removes each vertex from it with probability $1/2$ and then adds all neighbours of the removed vertices, similar to the operator from~\cite{DBLP:journals/tcs/BransonS23}. Since we add all neighbours of the removed vertices, it guarantees that the result of this mutation is a cover, that is, there is no edge for which none of the two adjacent vertices is in the resulting individual. This cover might be of size less than $k$, and in this case we add some randomly chosen vertices to make the size of the vertex cover exactly $k$.

\begin{algorithm}[tp]
  \textbf{Input:} graph $G = (V, E)$ \;
  \textbf{Input:} integer $k > 0$\;
  \textbf{Input:} parent cover $x \subseteq V$ such that $|x| \le k$\;

  $S \gets \emptyset$\tcp*{A set of vertices to remove from $x$}
  \For{$v \in x$}{
    With probability $\frac{1}{2}$ add $v$ to $S$\;
  }
  $y \gets x \setminus S$\;
  \For(\tcp*[f]{Adding neighbours of the removed vertices}){$v \in S$}{
    \For{$u \in V: (u, v) \in E$}{ 
        $y \gets y \cup \{u\}$\;
    }
  }

  \While(\tcp*[f]{Adding more random vertices to get a $k$-cover}){$|y| < k$}{
    \label{line:adding-v}
    Choose $z$ from $V \setminus y$ u.a.r.\;
    $y \gets y \cup \{z\}$\;
  }
  
  \Return $y$\;
 \caption{The jump-and-repair mutation operator for diversity optimization on $k$-vertex cover problem on graph $G = (V, E)$ based on Algorithm~4 in~\cite{DBLP:journals/tcs/BransonS23}.} 
 \label{alg:jump-and-repair}
\end{algorithm}

The following lemma is an extension of Lemma~4 in~\cite{DBLP:journals/tcs/BransonS23}.

\begin{lemma}
\label{lem:operator}
    Let $x$ be a $k$-vertex cover of graph $G$ and let $y$ be a non-excessive cover of size at most $k$. Then the probability that the jump-and-repair mutation (Algorithm~\ref{alg:jump-and-repair}) applied to $x$ generates $y$ before adding additional random vertices (that is, by line~\ref{line:adding-v} in Algorithm~\ref{alg:jump-and-repair}) is exactly $2^{-k}$.
\end{lemma}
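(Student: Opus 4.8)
The plan is to reduce the event in question to a statement about the random removal set $S$ chosen in the first \textbf{for} loop of Algorithm~\ref{alg:jump-and-repair}. If $S \subseteq x$ is the set of removed vertices, then the individual present by line~\ref{line:adding-v} (that is, before any random vertices are added) is exactly $y' = (x \setminus S) \cup N(S)$, where $N(S) := \{u \in V : (u,v) \in E \text{ for some } v \in S\}$ is the neighbourhood of $S$. So I want to show that $(x \setminus S) \cup N(S) = y$ holds for exactly one $S \subseteq x$, namely $S = x \setminus y$; since the loop flips an independent fair coin for each $v \in x$, this particular $S$ is obtained with probability $2^{-|x|} = 2^{-k}$ (here I use $|x| = k$, which holds because, as argued before the lemma, it never makes sense to keep covers of size less than $k$; for $|x| < k$ the same argument gives $2^{-|x|}$).

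For the ``existence'' direction I would set $S = x \setminus y$, so $x \setminus S = x \cap y$, and verify $(x \cap y) \cup N(x \setminus y) = y$ by double inclusion. The inclusion ``$\subseteq$'' is routine: $x \cap y \subseteq y$, and for $v \in x \setminus y$ every neighbour $u$ of $v$ lies in $y$ because $y$ covers the edge $(u,v)$ while $v \notin y$; hence $N(x \setminus y) \subseteq y$. For ``$\supseteq$'', take $w \in y$: either $w \in x$, so $w \in x \cap y$, or $w \notin x$, in which case non-excessiveness of $y$ yields an edge $(w,u)$ with $u \notin y$ (otherwise $y \setminus \{w\}$ would still be a cover); since $x$ covers $(w,u)$ and $w \notin x$, we get $u \in x$, hence $u \in x \setminus y$ and $w \in N(x \setminus y)$. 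This parallels the argument of Lemma~4 in~\cite{DBLP:journals/tcs/BransonS23}.

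The ``uniqueness'' direction is the part I expect to be the crux, and it is where non-excessiveness is used a second time. Suppose $S \subseteq x$ satisfies $(x \setminus S) \cup N(S) = y$. One inclusion is immediate: if $v \in x \setminus y$ then $v \notin y \supseteq x \setminus S$, and since $v \in x$ this forces $v \in S$, so $x \setminus y \subseteq S$. For the reverse, suppose toward a contradiction that some $v \in S$ also lies in $y$; non-excessiveness gives an edge $(v,u)$ with $u \notin y$, but $v \in S$ implies $u \in N(S) \subseteq y$, a contradiction. Hence $S \cap y = \emptyset$, i.e.\ $S \subseteq x \setminus y$, and combining the two inclusions gives $S = x \setminus y$.

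Putting these together, the intermediate individual at line~\ref{line:adding-v} equals $y$ if and only if $S = x \setminus y$, an event of probability exactly $2^{-|x|} = 2^{-k}$. I would finish by tidying the one loose end: when $|y| < k$ the \textbf{while} loop still runs afterwards, but the lemma only concerns the individual already present at line~\ref{line:adding-v}, so these later random additions are irrelevant to the claim. The two applications of non-excessiveness (for ``$\supseteq$'' above and for the reverse inclusion in uniqueness) are the only non-mechanical steps; everything else is bookkeeping with set identities and the coin-flip probability.
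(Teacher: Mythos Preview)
Your proof is correct and shares its core with the paper's: take $S = x \setminus y$, observe that this choice yields $y$, and note that this particular $S$ occurs with probability $2^{-|x|} = 2^{-k}$. The paper phrases the existence step slightly differently---it argues that $(x\setminus S)\cup N(S)$ is a cover contained in $y$ and hence equals $y$ by non-excessiveness---but this is equivalent to your double inclusion. Where you go further is in proving \emph{uniqueness} of $S$: the paper silently skips this even though the statement says ``exactly $2^{-k}$'', so strictly speaking the paper only establishes the lower bound (which is all Theorem~\ref{thm:mmea-runtime} needs), whereas your second use of non-excessiveness delivers the full ``exactly'' claim. Your remark about $|x|$ versus $k$ is also apt; the paper tacitly assumes $|x|=k$ in line with the discussion in Section~\ref{sec:vc}.
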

\begin{proof}
    Let $S$ be $x \setminus y$, that is, the vertices in $x$ which are not in $y$. All neighbours of vertices in $S$ are in $y$, since otherwise an edge between such a vertex and it neighbour is not covered by $y$. Therefore, removing $S$ from $x$ and adding their neighbours results in a cover which is a subset of $y$. Since $y$ is non-excessive, it is exactly $y$.

    The probability that we remove $S$ and keep $x \setminus S$ is exactly $2^{-k}$, since each vertex is removed with probability $\frac{1}{2}$. \qed
\end{proof}

\subsection{The Considered EAs}

\begin{algorithm}[tp]
  \textbf{Input:} population of $\mu$ individuals $P$\;

  Define $g: x \mapsto \min\{f(x), B\}$\;

  \While{stopping criterion is not met}{
    Create a new individual $y$ \tcp*{We intentionally do not specify how it is created}
    \If{$g(y) \ge \min_{x \in P} g(x)$}{
        $P \gets P \cup \{y\}$\;
        $Q \gets \argmin_{x \in P} g(x)$ \tcp*{$\argmin$ returns a set of individuals (probably, a trivial set of one element)}
        $S \gets \argmin_{x \in Q} D(P \setminus \{x\})$\;
        \If{$|S| \ge 2$ and $y \in S$}{
            $S \gets S \setminus \{y\}$\;
        }
        $x \gets$ random individual from $S$\;
        $P \gets P \setminus \{x\}$\; 
    }
  }
  \Return $P$\;
  
 \caption{The \mea maximizing function $f$ and maximizing diversity measure $D$ with fitness threshold $B$.} 
 \label{alg:mea}
\end{algorithm}

In this paper we consider population-based EAs which are commonly used in the diversity optimization. Most of theoretical studies of EDO considered the \mea, which optimizes the diversity only when it breaks ties between candidates for the next generation~\cite{DBLP:conf/gecco/Bossek021,DBLP:journals/telo/DoGNN22,DBLP:conf/gecco/DoerrGN16}. We describe a generalized version of this EA in Algorithm~\ref{alg:mea}. This algorithm stores a population $P$ of $\mu$ individuals. We do not specify the way these individuals are initialized. In each iteration this algorithm creates a new individual $y$ by applying variation operators (usually, mutation and crossover) to some randomly chosen individuals from the population. If the fitness of $y$ is not worse than the worst fitness in the population or satisfies the quality threshold, $y$ is added into $P$, and then we remove an individual with the worst fitness (counting all individuals which meet the quality threshold as equal). If there are several such individuals, we remove the one with the smallest contribution to the diversity. If there are still more than one such individuals, then we choose one of them uniformly at random (but in this case we do not choose $y$ if it is one of these individuals) and remove it. We note that since we are searching for a population of individuals which meet the minimum fitness threshold $B$, all individuals above this threshold are similarly feasible for us. Hence, instead of optimizing the original objective $f$, the \mea optimizes $g: x \mapsto \min\{f(x), B\}$.

We also consider an elitist EA with a non-trivial offspring population, the \mlea. The main difference of this algorithm from the \mea is that it creates $\lambda$ offspring in each iteration, each by independently choosing parents (or a parent) from the current population and applying variation operators to them. The main complication of the \mlea compared to the \mea is in the selection of the individuals into the next population. After we add all $\lambda$ offspring to the current population $P$, we need to remove $\lambda$ individuals from $P$. We first remove individuals with the minimum fitness (according to the modified fitness $g$) as long as the removal of them does not make size of $P$ less than $\mu$. If after that the size of $P$ is more than $\mu$, we need to break a tie and select $|P|-\mu$ individuals with the worst fitness to remove. We always select a set of $|P| - \mu$ such individuals which has the smallest contribution to the diversity, similar to how we do it in the \mea.

The subset selection problem might be very demanding for the computational resources, especially when we have to break a tie between $2\mu$ individuals, hence in the diversity optimization it makes sense to use the following analogue of the \mlea, which we call the \mmea. This algorithm is initialized with a population of $\mu$ individuals, all of which meet our constraints on the worst fitness value (if we get at least one such individual, we can use a population of its copies). In each iteration it creates a population $P'$ of $\mu$ offspring individuals (in the same way as the \mlea creates $\lambda$ offspring) and replaces $P$ with $P'$, if all individuals of $P'$ are feasible and the diversity of $P'$ is not worse than the diversity of $P$ according to the chosen diversity measure. This algorithm can be considered as a variant of the $(1 + 1)$~EA in the population space, for which individuals are bit strings of size $\mu n$. Its pseudocode is shown in Algorithm~\ref{alg:mmea}.

\begin{algorithm}[tp]
  \textbf{Input:} population of $\mu$ individuals $P$ meeting the minimum fitness threshold\;

  \While{stopping criterion is not met}{
    $P' \gets \emptyset$\;
    \For{$i \in [1..\mu]$}{
        Create a new individual $y$\;
        $P' \gets P' \cup \{y\}$\;
    }
    \If{$\forall x \in P: f(x) \ge B$ and $D(P') \ge D(P)$}{
        $P \gets P'$.
    }
  }
  \Return $P$\;
  
 \caption{The \mmea maximizing diversity measure $D$ under constraint $f(x) \ge B$.} 
 \label{alg:mmea}
\end{algorithm}

\section{Locally Optimal Population}
\label{sec:local-opt}

In this section we give examples of vertex cover instances, for which there exists a population with sub-optimal diversity, such that to improve its diversity we need to change at least two individuals together. This implies that if a $(\mu + 1)$-kind of algorithm gets such a population, it gets stuck in it, since it only changes one individual in each iteration. Note that in this section the diversity is always measured via the total Hamming distance.

We start with a simple example, where the population size is $2$ and the graph has $8$ vertices. We then extend this simple example to an arbitrary even population size $\mu$ and any even problem size $n \ge 10$.

\subsection{The Simple Example}

Consider graph $G = (V, E)$ with $8$ vertices $\{v_1, \dots, v_8\}$ and edges as shown in Figure~\ref{fig:locally-opt-graph}. 

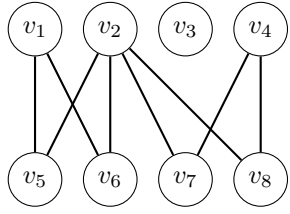
\begin{figure}
    \begin{center}
        \begin{tikzpicture}
            \node (v1) [draw, circle] at (1, 2) {$v_1$};
            \node (v2) [draw, circle] at (2, 2) {$v_2$};
            \node (v3) [draw, circle] at (3, 2) {$v_3$};
            \node (v4) [draw, circle] at (4, 2) {$v_4$};
            \node (v5) [draw, circle] at (1, 0) {$v_5$};
            \node (v6) [draw, circle] at (2, 0) {$v_6$};
            \node (v7) [draw, circle] at (3, 0) {$v_7$};
            \node (v8) [draw, circle] at (4, 0) {$v_8$};

            \draw [thick] (v1) -- (v5);
            \draw [thick] (v1) -- (v6);
            \draw [thick] (v2) -- (v5);
            \draw [thick] (v2) -- (v6);
            \draw [thick] (v2) -- (v7);
            \draw [thick] (v2) -- (v8);
            \draw [thick] (v4) -- (v7);
            \draw [thick] (v4) -- (v8);
        \end{tikzpicture}
    \end{center}
    \caption{Graph $G$, for which we have a population of 4-covers with suboptimal diversity, from which it is impossible to escape by replacing only one individual.}
    \label{fig:locally-opt-graph}
\end{figure}

This is a bipartite graph which has $8$ vertices and $8$ edges. We will show that when we aim at finding the most diverse population of size $\mu = 2$ for a $4$-vertex cover on this graph, we might occur in a local optimum which is impossible to escape for the $(\mu + 1)$ kind of algorithms. The next lemma shows the main properties of $G$.

\begin{lemma}\label{lem:graph-properties}
    The following statements are true for graph $G$ shown in Figure~\ref{fig:locally-opt-graph}
    \begin{enumerate}
        \item There is no vertex cover of size one or two.
        \item The only $3$-vertex cover is $\{v_1, v_2, v_4\}$.
        \item The only $4$-vertex cover which includes $v_3$ is $\{v_1, v_2, v_3, v_4\}$.
        \item The only $4$-vertex cover which does not include $v_2$ is $\{v_5, v_6, v_7, v_8\}$.
    \end{enumerate}
\end{lemma}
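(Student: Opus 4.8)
The plan is to prove each of the four statements by a direct case analysis on the edge set of $G$, exploiting the fact that $G$ has only $8$ edges and a very rigid structure: $v_3$ is isolated, $v_1$ and $v_4$ each have degree $2$ (covering only $\{v_1v_5, v_1v_6\}$ and $\{v_4v_7, v_4v_8\}$ respectively), and $v_2$ is the unique high-degree vertex, adjacent to all of $v_5, v_6, v_7, v_8$. The key structural observation I would state up front is that the edges split into two ``blocks'': the edges incident to $\{v_5, v_6\}$ are exactly $\{v_1v_5, v_1v_6, v_2v_5, v_2v_6\}$, so they form a $K_{2,2}$ on $\{v_1,v_2\}\times\{v_5,v_6\}$; similarly the edges incident to $\{v_7,v_8\}$ are exactly $\{v_2v_7, v_2v_8, v_4v_7, v_4v_8\}$, a $K_{2,2}$ on $\{v_2,v_4\}\times\{v_7,v_8\}$. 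Covering a $K_{2,2}$ requires at least two vertices, and with exactly two vertices one must take a full side.

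For statement~1: the two $K_{2,2}$ blocks share only the vertex $v_2$, so any cover must contain at least one vertex from $\{v_1,v_2,v_5,v_6\}$ and at least one from $\{v_2,v_4,v_7,v_8\}$; if a cover has size at most two it must therefore either contain $v_2$ plus one more, or contain one vertex from each block with $v_2$ in neither. In the first subcase, removing $v_2$ leaves the first block covered only if the ``one more'' vertex is in $\{v_1,v_5,v_6\}$ and the second block covered only if it is in $\{v_4,v_7,v_8\}$ — impossible. In the second subcase each block is a $K_{2,2}$ covered by a single vertex, which is impossible. Hence no cover of size $\le 2$ exists. For statement~2: a $3$-cover must use at least two vertices total from the first block and at least two from the second, and since the blocks overlap only in $v_2$, to do this with three vertices we must use $v_2$ (contributing to both blocks) plus exactly one more vertex per block; the extra vertex covering the first block's remaining edges $v_1v_5, v_1v_6$ must be $v_1$ (the only common endpoint), and the extra vertex for $v_4v_7, v_4v_8$ must be $v_4$. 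This forces $\{v_1,v_2,v_4\}$, and one checks directly that it is a cover. For statement~3: if $v_3$ is in a $4$-cover, the other three vertices must already cover all $8$ edges, so they form a $3$-cover; by statement~2 they are $\{v_1,v_2,v_4\}$, giving $\{v_1,v_2,v_3,v_4\}$. For statement~4: if $v_2\notin C$ with $|C|=4$, then the first block's edges $v_1v_5, v_1v_6, v_2v_5, v_2v_6$ must be covered using only $\{v_1,v_5,v_6\}$; covering $v_2v_5$ needs $v_5$ and covering $v_2v_6$ needs $v_6$, so $\{v_5,v_6\}\subseteq C$; symmetrically the second block forces $\{v_7,v_8\}\subseteq C$; since $|C|=4$ this gives $C=\{v_5,v_6,v_7,v_8\}$, which indeed covers every edge (each listed edge has an endpoint in $\{v_5,v_6,v_7,v_8\}$).

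I do not expect any real obstacle here; the only thing requiring minor care is being systematic in statement~1 so as not to miss a case, and remembering to verify in each statement that the candidate set actually is a cover (not just that it is forced). The whole argument is elementary finite casework driven by the $K_{2,2}$ decomposition, so the write-up is short; the ``hard part,'' such as it is, is merely presenting the case split cleanly.
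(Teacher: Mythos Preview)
Your proof is correct. For statements~3 and~4 your argument is essentially identical to the paper's (using that $v_3$ is isolated so the remaining three vertices must form the unique $3$-cover, and that excluding $v_2$ forces all its neighbours into the cover). For statements~1 and~2 you take a somewhat different route: you decompose $E(G)$ into two $K_{2,2}$ blocks on $\{v_1,v_2\}\times\{v_5,v_6\}$ and $\{v_2,v_4\}\times\{v_7,v_8\}$ sharing only $v_2$, and then use that a $K_{2,2}$ needs at least two vertices to cover. The paper instead uses a global degree-sum argument: the number of edges covered by a set $V'$ is $\sum_{v\in V'}\deg(v)$ minus the number of doubly covered edges, and since only $v_2$ has degree $4$ while all others have degree at most $2$, any one or two vertices cover at most $6<8$ edges, and any $3$-cover must contain $v_2$ with the other two vertices disjoint from $N(v_2)$. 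Both approaches are short and elementary; your block decomposition is arguably more structural (and makes the ``with exactly two vertices one must take a full side'' fact reusable), while the paper's degree count is slightly quicker to write down and avoids any case split for statement~1. One small presentational point: in your statement~1 case split, the subcase ``size two, $v_2$ not in the cover, both vertices in the same block'' is not explicitly listed, though it is of course trivially excluded since the other block then has no vertex at all; you may want to phrase the dichotomy simply as ``each $K_{2,2}$ needs at least two cover vertices and they share only $v_2$, so at least $2+2-1=3$ vertices are required.''
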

\begin{proof}
    The proof of all statements is based on the following observation. If we have a set of vertices $V'$ with known degrees (number of adjacent edges), then the number of edges they cover is $\sum_{v \in V'} deg(v) - K$, where $K$ is the number of edges which are covered from both sides.

    \textbf{The first statement.} 
    Graph $G$ has only one vertex with degree $4$ (namely, $v_2$), and all others have degree 2, except $v_3$ which has degree zero. Hence, one or two vertices can cover at most 4 and 6 edges respectively, which is less than the total number of edges. Hence, no 1-cover or 2-cover exists.

    \textbf{The second statement.}
    A $3$-vertex cover must include $v_2$, otherwise the sum of degrees of the vertices will be at most $6$, which is less than the number of edges. The other two vertices cannot cover any of the edges covered by $v_2$, since otherwise the number of covered edges will be strictly less than the sum of their degrees, that is, 8. Hence, we cannot have $v_5, v_6, v_7$ or $v_8$ in this cover. Among the rest, the only two vertices which have degree at least $2$ are $v_1$ and $v_4$, which we must include into the cover. Since $\{v_1, v_2, v_4\}$ covers all the edges, it is a unique $3$-vertex cover.

    \textbf{The third statement.}
    If we include $v_3$ into the cover, it does not cover any edges. Hence, the rest of the three vertices must cover all $8$ edges. By the previous statement, the only way to do so is to take $\{v_1, v_2, v_4\}$. Hence, $\{v_1, v_2, v_3, v_4\}$ is the only $4$-vertex cover that includes $v_3$.

    \textbf{The fourth statement.}
    If we do not include $v_2$ in the cover, then we need to include all its neighbors to cover the edges adjacent to $v_2$. Its neighbors are $\{v_5, v_6, v_7, v_8\}$, which form a vertex cover. \qed
\end{proof}

Based on these properties we can build a population with locally optimal diversity. We do it in the following lemma. We note that for the population of size two, the diversity is defined by the Hamming distance between the two individuals.

\begin{lemma}
\label{lem:example}
    Consider a problem of finding the most diverse population of size two of 4-vertex covers of graph $G$ shown in Figure~\ref{fig:locally-opt-graph}. The following sets are 4-vertex covers for this graph.
    \begin{itemize}
        \item $V_1 = \{v_1, v_2, v_7, v_8\}$,
        \item $V_2 = \{v_2, v_4, v_5, v_6\}$,
        \item $V_3 = \{v_1, v_2, v_3, v_4\}$,
        \item $V_4 = \{v_5, v_6, v_7, v_8\}$.
    \end{itemize}
    Covers $V_1$, $V_2$ and $V_4$ are non-excessive.
    Then the unique population of size two with the maximum Hamming distance 8 is $(V_3, V_4)$. Population $(V_1, V_2)$ has a Hamming distance 6 and replacing any individual with a different 3- or 4-vertex cover would reduce the Hamming distance, that is, this population is a local optimum.
\end{lemma}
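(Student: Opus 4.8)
The plan is to settle the two ``membership'' claims by direct inspection and to reduce the two quantitative claims to the structural facts already in Lemma~\ref{lem:graph-properties}. For the membership part: checking that each $V_i$ covers all eight edges is a one-line verification against the edge list (and for $V_3,V_4$ it also follows from statements~3 and~4 of Lemma~\ref{lem:graph-properties}), and non-excessiveness of $V_1$, $V_2$, $V_4$ is just as direct --- for every vertex in such a set I would exhibit an incident edge whose other endpoint lies outside the set, so that removing the vertex uncovers that edge (e.g.\ in $V_1=\{v_1,v_2,v_7,v_8\}$ the edges $v_1v_5,v_2v_5,v_4v_7,v_4v_8$ witness non-removability of $v_1,v_2,v_7,v_8$ respectively).

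For the uniqueness of the diameter-$8$ population I would argue as follows. Since $|P|=2$ and individuals are length-$8$ bit strings, $D(P)=H(x,y)\le 8$, with equality exactly when $y=\bar x$, i.e.\ when the two covers partition $V$. A set $S\subseteq V$ and its complement are both covers iff both are independent, i.e.\ iff $\{S,V\setminus S\}$ is a proper $2$-colouring of $G$. The subgraph induced by $V\setminus\{v_3\}$ is connected (indeed $v_6v_1v_5v_2v_7v_4v_8$ is a path through all seven of its vertices) and bipartite with colour classes $\{v_1,v_2,v_4\}$ and $\{v_5,v_6,v_7,v_8\}$, so its $2$-colouring is unique up to swapping the classes, and the isolated vertex $v_3$ may join either side. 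A Hamming distance of $8$ between two covers of size at most $4$ forces both of them to have size exactly $4$, hence $v_3$ must go with $\{v_1,v_2,v_4\}$, which gives precisely the partition $\{V_3,V_4\}$, of Hamming distance $8$.

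For the local optimality of $(V_1,V_2)$, computing the symmetric difference gives $H(V_1,V_2)=6$, and it remains to check that every $3$- or $4$-vertex cover $C\neq V_1$ has $H(C,V_2)<6$, and symmetrically with $V_1,V_2$ swapped. For a $4$-cover $C$ we have $H(C,V_2)=2(4-|C\cap V_2|)$, so $H(C,V_2)\ge 6$ forces $|C\cap V_2|\le 1$; the value $8$ would require $v_2\notin C$, which by statement~4 of Lemma~\ref{lem:graph-properties} happens only for $C=V_4$, where $H(V_4,V_2)=4$. So a $4$-cover $C\neq V_4$ at distance $\ge 6$ from $V_2$ must have $C\cap V_2=\{v_2\}$, i.e.\ $C=\{v_2\}\cup T$ with $T$ a $3$-subset of $\{v_1,v_3,v_7,v_8\}$; of the four candidates, three leave one of $v_1v_5,v_4v_7,v_4v_8$ uncovered and only $\{v_1,v_2,v_7,v_8\}=V_1$ is a cover. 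Since the unique $3$-cover $\{v_1,v_2,v_4\}$ is at distance $3$ from $V_2$ and no covers of size $1$ or $2$ exist, $V_1$ is the only cover of size $\le 4$ at distance $\ge 6$ from $V_2$; replacing $V_2$ is handled identically, with $\{v_1,v_3,v_7,v_8\}$ replaced by the complement $\{v_3,v_4,v_5,v_6\}$ of $V_1$. Thus any replacement of one individual of $(V_1,V_2)$ by a different $3$- or $4$-cover strictly lowers $D$ (from $6$, which is already below the optimal value $8$).

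The only non-routine points, and hence the main obstacles, are making the two small enumerations watertight: the rigidity of the $2$-colouring of $G-v_3$ (on which uniqueness of $\{V_3,V_4\}$ rests) and the verification that among the sets $\{v_2\}\cup T$ only $V_1$ is a cover (on which local optimality rests). Both are short, but the conclusion hinges on not overlooking a case.
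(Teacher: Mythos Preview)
Your argument is correct, but it diverges from the paper's proof in two places and it is worth seeing what each approach buys. For the uniqueness of the distance-$8$ pair, the paper relies directly on statements~3 and~4 of Lemma~\ref{lem:graph-properties}: if two covers of size at most $4$ are complementary, exactly one of them omits $v_2$ and hence equals $V_4$, whence the other is $V_3$. Your route via the uniqueness of the $2$-colouring of the connected bipartite graph $G-v_3$ is a nice alternative that makes the uniqueness completely explicit (the paper in fact glosses over this point), at the cost of checking connectivity. For the local-optimality part the paper avoids your four-case enumeration by a single structural observation: any $3$- or $4$-cover $C\notin\{V_3,V_4\}$ agrees with $V_1$ (and with $V_2$) in both position~$2$ and position~$3$, so $H(V_1,C)\le 6$, with equality forcing $C$ to differ from $V_1$ in all six remaining positions, which pins $C$ down to $V_2$ without ever listing candidate sets. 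Your computation $H(C,V_2)=2(4-|C\cap V_2|)$ followed by checking the four sets $\{v_2\}\cup T$ with $T\subset\{v_1,v_3,v_7,v_8\}$ reaches the same conclusion correctly, but the paper's ``fix positions $2$ and $3$'' trick is shorter and scales without case analysis.
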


\begin{proof}
    All the sets $V_1$, $V_2$, $V_3$ and $V_4$ are $4$-vertex covers, and the distances between them are the following.
    \begin{itemize}
        \item $H(V_1, V_2) = 6$,
        \item $H(V_3, V_4) = 8$ and
        \item all other distances are $4$ (since each pair coincides in exactly two included vertices and exactly two non-included).
    \end{itemize}

    By Lemma~\ref{lem:graph-properties}, $V_3$ is the unique $4$-cover which has $v_3$, and $V_4$ is the unique $4$-cover which does not include $v_2$. Therefore, for any 3- or 4-cover $V$, covers $V_1$ and $V_2$ coincide with it in including $v_2$ and not including $v_3$, hence $H(V_1, V) \le 6$ and $H(V_2, V) \le 6$. To have $H(V_1, V) = 6$, we need $V$ to be different from $V_1$ in all vertices, except $v_2$ and $v_3$, and the only cover which satisfies it is $V_2$. It is also true the other way around: the only cover in distance 6 from $V_2$ is $V_1$. Hence, if we have population $(V_1, V_2)$, then replacing any of the individuals with another 3- or 4-cover makes the distance between the two individuals smaller, and therefore cannot be accepted by any $(\mu + 1)$ elitist algorithm. \qed
\end{proof}

\subsection{Extending the Example to Arbitrary Population and Problem Sizes}

Based on the example given in the previous subsection, we now show that a population with sub-optimal diversity which cannot be escaped by the $(\mu+1)$~EA exists also for larger $\mu$ and $|V|$. We start with extending our example for larger population sizes, keeping $|V| = 8$.

We consider the same graph $G$ as shown in Figure~\ref{fig:locally-opt-graph}, and give an example of a locally optimal population in the following lemma.

\begin{lemma}
\label{lem:arbitrary-mu}
    Let $\mu \ge 4$ be even and let $\nu = \frac{\mu}{2} - 1$. Let also $V_1$, $V_2$, $V_3$ and $V_4$ be the same vertex covers as in Lemma~\ref{lem:example}. Consider a population which has one individual $V_1$, one individual $V_2$, $\nu$ individuals $V_3$, and $\nu$ individuals $V_4$. Then this population has a sub-optimal diversity (the total Hamming distance) and replacing any individual with any different vertex cover of size at most $4$ reduces the diversity.
\end{lemma}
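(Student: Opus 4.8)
The plan is to push everything through the column-count representation of eq.~\eqref{eq:hamming} and then analyze how a single replacement perturbs those counts.

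\emph{Setting up the column counts and suboptimality.} First I would compute, for the population $P$ described in the lemma (with $\nu = \mu/2 - 1$ and $|P| = 2 + 2\nu = \mu$), the number $n_i$ of one-bits in each of the eight positions. Since $V_1, V_2, V_3$ all contain $v_2$ while $V_4$ does not, and $v_3$ occurs only in $V_3$, one gets $n_2 = \mu/2 + 1$, $n_3 = \mu/2 - 1$, and $n_i = \mu/2$ for the remaining six positions $i \in \{1,4,5,6,7,8\}$. Plugging into~\eqref{eq:hamming} yields $D(P) = 6(\mu/2)^2 + 2(\mu/2+1)(\mu/2-1) = 2\mu^2 - 2$. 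To see this is suboptimal I exhibit the competitor population consisting of $\mu/2$ copies of $V_3$ and $\mu/2$ copies of $V_4$: since $V_3$ and $V_4$ are disjoint and together cover $\{v_1,\dots,v_8\}$, every column has exactly $\mu/2$ ones, so its diversity is $8(\mu/2)^2 = 2\mu^2 > D(P)$; all covers involved have size $4$, hence are feasible.

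\emph{Local optimality via a perturbation argument.} Next I study the change in $D$ when one individual $x \in P$ is replaced by a different cover $x'$ of size at most $4$. Such a replacement changes $n_i$ by $\pm 1$ exactly on the positions in the symmetric difference of $x$ and $x'$. Writing $\phi(t) = t(\mu - t)$ for the per-column contribution, concavity gives $\phi(t \pm 1) - \phi(t) = -1$ whenever $t = \mu/2$, and more precisely: moving $n_2$ down by one gains exactly $1$ while moving it up costs $3$; moving $n_3$ up by one gains exactly $1$ while moving it down costs $3$; any change to a balanced column costs exactly $1$. Hence the only single-position operations that increase $D$ are ``delete $v_2$ from the replaced individual'' (possible only if $v_2 \in x$, $v_2 \notin x'$) and ``insert $v_3$'' (possible only if $v_3 \notin x$, $v_3 \in x'$), each worth $+1$.

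\emph{Bounding the total gain and finishing.} Here I invoke Lemma~\ref{lem:graph-properties}: the only cover of size at most $4$ avoiding $v_2$ is $V_4$, and the only one containing $v_3$ is $V_3$; since $v_2 \in V_3$ and $v_3 \notin V_4$, the two gain-producing operations cannot both happen, so the total gain from a single replacement is at most $1$. If the gain is $0$, then — because $x' \ne x$ forces at least one column to change and every such change is then a cost — $D$ strictly decreases. If the gain is $1$, then $x'$ is forced to equal $V_3$ or $V_4$, and $x$ is one of the at most three other cover types occurring in $P$; a direct inspection of these six concrete pairs $(x,x')$ shows each forces at least two further adverse column changes, so the net change is $\le 1 - 2 < 0$. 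Either way the diversity strictly drops, which is the claim.

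The arithmetic in the first paragraph and the six-case check at the end are routine. The step requiring the most care — and the only place the specific structure of $G$ (not just concavity) is used — is the ``gain at most $1$'' argument: one must correctly apply Lemma~\ref{lem:graph-properties} to conclude that any diversity-increasing replacement is forced to introduce $V_3$ or $V_4$ and that these two options exclude each other, since this is exactly what caps the gain and reduces the verification to finitely many explicit cases.
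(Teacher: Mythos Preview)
Your proof is correct and rests on the same column-count perturbation idea via eq.~\eqref{eq:hamming} as the paper. The paper organizes the case analysis by which individual $x$ is being replaced (four cases, $V_1$ through $V_4$, each split into sub-cases on $x'$), whereas you first cap the total gain at $1$ by observing that the only gain-producing column moves force $x' \in \{V_3, V_4\}$ and that these two are mutually exclusive, and then check the six concrete $(x,x')$ pairs; your organization makes the role of Lemma~\ref{lem:graph-properties} more explicit and avoids repeating the same per-column arithmetic across cases. You also explicitly establish suboptimality by exhibiting the competitor population of $\mu/2$ copies each of $V_3$ and $V_4$ with diversity $2\mu^2 > 2\mu^2 - 2$, a point the paper's proof of this lemma leaves implicit.
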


\begin{proof}
    We exploit the expression of the total Hamming distance given in eq.~\eqref{eq:hamming}. In the given population all $n_i$ (the number of one-bits in position $i$) are $\frac{\mu}{2}$, except for $i = 2$ and $i = 3$. Since the bit strings representing $V_1$ and $V_2$ have a one-bit in position $2$ and a zero-bit in position $3$, we have $n_2 = \frac{\mu}{2} + 1$ and $n_3 = \frac{\mu}{2} - 1$. Hence, if we change the number of one-bits in any position (except $2$ and $3$) by one, it decreases the corresponding term in eq.~\eqref{eq:hamming} by one, since we have
    \begin{align*}
        \left(\frac{\mu}{2}\right)^2 - \left(\frac{\mu}{2}+ 1\right)\left(\frac{\mu}{2} -1\right) = \left(\frac{\mu}{2}\right)^2 - \left(\left(\frac{\mu}{2}\right)^2 - 1\right) = 1.
    \end{align*}
    For the same reason, increasing the number of one-bits in position $2$ decreases this term by $3$ and decreasing the number of one-bits by one increases it by $1$. For position $3$ it is the other way around.
    
    We now consider different cases of replacing individuals in the population with different $4$- or $3$-covers, and show that all of them would only decrease the total Hamming distance.

    \textbf{Case 1: replacing $V_1$.} If we replace the only individual $V_1$ with $V_3$, then we increase the number of one-bits in position $3$ (and therefore, its contribution to the diversity is increased by one), and we change the number of one-bits in positions $4$, $7$ and $8$, which decreases the diversity by $3$. Therefore, the total diversity is decreased. Similarly, if we replace $V_1$ with $V_4$, we make the diversity in position $2$ better, but we unbalance positions $1$, $5$ and $6$, hence, we decrease the diversity.

    If we replace $V_1$ with any other $3$ or $4$-cover, then this cover has the same values in positions $2$ and $3$ by Lemma~\ref{lem:graph-properties}, and at least one other value should be different. This would decrease the term in eq.~\eqref{eq:hamming} which corresponds to this different position. Hence, the diversity is decreased.

    \textbf{Case 2: replacing $V_2$.} This case is similar to Case 1. Replacing $V_2$ with either $V_3$ or $V_4$ decreases the diversity by two, and any other replacement decreases it by at least one.

    \textbf{Case 3: replacing $V_3$.} In this case we consider replacing one of the $\nu$ individuals $V_3$ with a different one. If we replace it with $V_4$, then we decrease $n_2$, which increases the diversity by one. However, it also decreases $n_3$, which decreases the diversity by $3$ and changes all other $n_i$, which decreases the diversity by $6$ more. Hence, the diversity is decreased by $8$. Since by Lemma~\ref{lem:graph-properties} all other covers include $v_2$ and do not include $v_3$, replacing $V_3$ with one of such covers does not change $n_2$ and decreases $n_3$ by one, which reduces the diversity by $3$. The changes in other positions can only reduce diversity even more. Therefore, any replacement of any individual representing $V_3$ would decrease the diversity.

    \textbf{Case 4: replacing $V_4$.} This case is similar to Case 3. If we replace it with $V_3$, we balance position $3$, but we unbalance all other positions, which decreases the diversity. Replacing it with any other cover would unbalance at least one position.

    Bringing all cases together, we conclude that replacing any individual in this population with a different $3$- or $4$-vertex cover decreases the total Hamming distance, and therefore is not accepted by the algorithm. \qed
\end{proof}

To extend this result to larger problem sizes it is enough to add to the graph in Figure~\ref{fig:locally-opt-graph} a complete bipartite graph $K_{n,n}$, which is not connected with the basic graph and consider the $(n + 4)$-cover problem. Then the locally optimal population will be the same as in Lemma~\ref{lem:arbitrary-mu}, but half of the individuals in that population must contain one half of the bipartite graph, and another half of individuals---another half of the bipartite graph. This results in a population in which all positions except 2 and 3 are balanced. Hence, the arguments of Lemma~\ref{lem:arbitrary-mu} will work on this graph as well, if we note that changing the value of any bit corresponding to an added vertex would result in decreasing the corresponding term in~\eqref{eq:hamming} and therefore, reducing the diversity.

\section{Large Offspring Populations Are Effective}
\label{sec:mmea}

In this section we show that the \mmea and also the \mlea with $\lambda \ge \mu$ can effectively find a diverse population of $k$-vertex covers. The main result is the following theorem.

\begin{theorem}
    \label{thm:mmea-runtime}
    Consider the \mmea or the \mlea with $\lambda \ge \mu$, which optimize the total Hamming distance on a $k$-vertex cover instance, for which at least one $k$-cover exist. If we have $k\mu = o(\sqrt{n})$, then in each iteration the probability of these two algorithms to find a population of $k$-vertex covers with optimal diversity is at least $2^{-k\mu}(1 - o(1))$, and therefore, its runtime is dominated by geometric distribution $\Geom(2^{-k\mu}(1 - o(1)))$.
\end{theorem}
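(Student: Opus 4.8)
The plan is to show that in a single iteration each of the two algorithms can, with probability at least $2^{-k\mu}(1-o(1))$, construct an entire population of $k$-vertex covers whose total Hamming distance is the maximum possible. Once such an event has positive probability bounded below uniformly over iterations, a standard coupling argument shows the runtime is dominated by $\Geom(2^{-k\mu}(1-o(1)))$. So the work is entirely in the single-iteration analysis.

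First I would establish the existence of an optimal target population $\Popt = (y_1, \dots, y_\mu)$ all of whose members are non-excessive $k$-vertex covers and whose total Hamming distance is globally optimal. For this I would argue as in the remark in Section~\ref{sec:vc}: any cover of size $< k$ can be padded to size $k$ using vertices that appear in no other cover of the population, which only increases eq.~\eqref{eq:hamming}; and any excessive cover can have a redundant vertex swapped for a fresh ``private'' vertex without decreasing diversity, since moving a one-bit from a position with several ones to a previously all-zero position can only help balance eq.~\eqref{eq:hamming}. The key enabler here is $k\mu = o(\sqrt n)$: by the pigeonhole principle at most $o(\sqrt n)$ positions carry any one-bit in the whole population, so there is an abundant supply of all-zero positions to use as private vertices. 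I would then fix one such $\Popt$ for the rest of the argument.

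Next comes the core step: lower-bounding the probability that the offspring population $P'$ equals $\Popt$ (as a multiset, with a fixed assignment of parents to targets). For the \mmea this is the product over $i \in [1..\mu]$ of the probability that the $i$-th offspring, generated from parent $x_i$, equals $y_i$. By Lemma~\ref{lem:operator}, the jump-and-repair operator produces $y_i$ \emph{before adding random vertices} with probability exactly $2^{-k}$, since each $y_i$ is a non-excessive cover of size at most $k$. If $|y_i| = k$ we are done with that offspring at cost $2^{-k}$. If $|y_i| < k$, the operator then adds $k - |y_i|$ random vertices from $V \setminus y_i$; the probability that these are exactly the intended private vertices of $y_i$ is $\prod_{j}(n - |y_i| - j)^{-1}$ over the $k-|y_i|$ additions, which is $(1-o(1))\, n^{-(k - |y_i|)}$ again because $k\mu = o(\sqrt n)$. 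Wait — this factor is not $\Theta(1)$, so I cannot simply absorb it into the $(1-o(1))$. I need to be slightly more careful: I should choose $\Popt$ so that every $y_i$ is itself a non-excessive cover of size \emph{exactly} $k$ whenever possible, and only resort to padding when no such cover exists. Actually the cleanest route is to observe that if the optimal non-excessive covers have size exactly $k$ then each offspring matches with probability exactly $2^{-k}$ and the whole population matches with probability $2^{-k\mu}$; if some optimal covers are strictly smaller, one re-examines whether a size-$k$ non-excessive alternative with the same contribution to eq.~\eqref{eq:hamming} exists, which it does under the padding/swapping argument above — the padded version is a cover of size $k$, though possibly excessive. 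To keep the operator's guarantee of $2^{-k}$ I must ensure non-excessiveness; this tension is the main obstacle and I would resolve it by padding with private vertices that are \emph{leaves} or otherwise forced, so that the padded cover remains non-excessive, or by absorbing the small polynomial loss and noting $k\mu=o(\sqrt n)$ still leaves the bound of the form $2^{-k\mu}(1-o(1))$ after a more careful accounting — but the theorem as stated wants exactly $2^{-k\mu}(1-o(1))$, so the intended reading is surely that the optimal target can be taken with all covers non-excessive of size $\le k$ and the $2^{-k}$ per offspring multiplies cleanly, the $(1-o(1))$ accounting only for the (negligible) chance that independently generated offspring accidentally collide in a way that matters.

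Finally I would handle the two remaining points. For the \mmea, conditioned on $P' = \Popt$, the acceptance test passes because all members of $\Popt$ are feasible $k$-covers and $D(\Popt) \ge D(P)$ by optimality of $\Popt$; so $P$ is replaced and the algorithm has reached an optimal population, which it never leaves since no population has strictly larger diversity. For the \mlea with $\lambda \ge \mu$, I generate $\mu$ of the $\lambda$ offspring to be exactly $y_1, \dots, y_\mu$ (probability at least $2^{-k\mu}(1-o(1))$, ignoring the other $\lambda - \mu$ offspring); then after merging and truncating, the subset-selection step is forced to keep a $\mu$-subset of maximum diversity, and since $\{y_1,\dots,y_\mu\}$ is already present and already globally optimal, whatever $\mu$-subset the algorithm retains has diversity $\ge D(\Popt)$, hence is itself optimal. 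Then I invoke the geometric-domination lemma: since the success probability per iteration is at least $p := 2^{-k\mu}(1-o(1))$ regardless of history, the first hitting time is stochastically dominated by $\Geom(p)$, giving the stated expected bound. The one genuine subtlety to nail down is the $(1-o(1))$ factor and, relatedly, whether an optimal target population consisting solely of non-excessive size-$k$ covers always exists; everything else is routine given Lemma~\ref{lem:operator} and eq.~\eqref{eq:hamming}.
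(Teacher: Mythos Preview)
Your proposal correctly identifies the overall architecture (lower-bound the per-iteration success probability, then invoke geometric domination), and you correctly isolate the crux: Lemma~\ref{lem:operator} only gives the clean $2^{-k}$ factor for \emph{non-excessive} targets, while an optimally diverse population need not consist of non-excessive size-$k$ covers. You then try to resolve this by engineering a fixed target $\Popt$ whose members are all non-excessive of size exactly $k$, and you rightly flag that this may be impossible (padding with ``private'' vertices typically destroys non-excessiveness, and hitting specific padding vertices costs a polynomial factor you cannot absorb into $(1-o(1))$).

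The paper's proof sidesteps this obstacle entirely, and this is the idea you are missing. It does \emph{not} aim at a fixed $\Popt$. Instead it takes any optimal $\Popt$, strips each member down to a non-excessive sub-cover to obtain $\Popt'$, and then argues two things: (i) by Lemma~\ref{lem:operator}, each offspring equals the corresponding member of $\Popt'$ \emph{before} the random-padding loop with probability $2^{-k}$, so all $\mu$ cores are produced with probability $2^{-k\mu}$; and (ii) \emph{any} way of padding $\Popt'$ up to size $k$ using only positions that are all-zero across the current offspring yields a population with the same (hence optimal) diversity, so it suffices that the padding loop avoids the at most $k\mu$ already-occupied positions. That avoidance probability is at least $(1-k\mu/n)^{k\mu} \ge 1-(k\mu)^2/n = 1-o(1)$ by Bernoulli's inequality and the assumption $k\mu=o(\sqrt n)$. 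The $(1-o(1))$ in the theorem is precisely this padding-avoidance probability, not an ``accidental collision'' correction as you speculate. Once you stop insisting on a specific target and instead allow the random padding to pick \emph{any} fresh vertices, the tension you identified between non-excessiveness and size-$k$ disappears.
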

\begin{proof}
    Let $\Popt$ be a population of $\mu$ individuals which meet the constraint on fitness and which has the best possible diversity. As it was discussed in Section~\ref{sec:vc}, all individuals in this population have exactly $k$ vertices, since otherwise we could add additional vertices to some of them and increase diversity. Let $\Popt'$ be another population of $\mu$ covers, where the $i$-th individual is a non-excessive cover, which is a subset of the $i$-th individual in $\Popt'$. 

    To get $\Popt$ from $\Popt'$ we need to add vertices to individuals which have less than $k$ vertices. If we add a vertex to position $i$, in which $n_i > 0$ individuals have a one-bit (that is, they include vertex $i$), then the improvement of the corresponding term in eq.~\eqref{eq:hamming} will be 
    \begin{align*}
        (n_i + 1) (\mu - n_i - 1) - n_i (\mu - n_i) = \mu - n_i - 1 - n_i + 1 < \mu.
    \end{align*}
    Hence, we can add this vertex to another position $i$, in which $n_i = 0$ and improve the diversity more, namely, by $\mu$. Therefore, if at least one vertex is added to a position with $n_i > 0$, then it contradicts with optimality of $D(\Popt)$. We also note that any way of adding the missing vertices to $\Popt'$ to positions with $n_i = 0$, would yield the same increase in diversity, therefore, to get an optimal diversity, we do not have to get $\Popt$: any other population obtained in this way from $\Popt'$ has an optimal diversity.

    The probability that we create such a population in one iteration of the \mmea or the \mlea with $\lambda \ge \mu$ is at least the probability that for each $i$ we generate the $i$-th offspring $y_i$ by first creating the $i$-th individual $x_i'$ of $\Popt'$ and then we add the missing vertices (if there are any) to positions with no vertices in other individuals. For the \mmea it gives a population with optimal diversity, and for the \mlea the population with optimal diversity is a subset of the new offspring, hence the diversity of the next-generation population cannot be sub-optimal.

    By Lemma~\ref{lem:operator}, the probability to create an individual from $\Popt'$ is $2^{-k}$, hence the probability that we create exactly $\mu$ such individuals is at least $2^{-k\mu}$. We then add the missing vertices. At each point of time there are at least $n - k\mu$ \emph{good} positions in which there is no vertex in any individual, and there are at most $n$ positions to which we add a vertex, hence the probability that we add it to a good position is at least $(1 - \frac{k\mu}{n})$. Since we need to add at most $k\mu$ vertices, the probability that all of them are added to good positions is at least 
    \begin{align*}
        \left(1 - \frac{k\mu}{n}\right)^{k\mu} \ge 1 - \frac{(k\mu)^2}{n} = 1 - o(1),
    \end{align*}
    where we used Bernoulli inequality and the lemma condition $k\mu = o(\sqrt{n})$. Hence, we conclude that the probability to create a population with optimal diversity in each iteration is at least $2^{-k\mu}(1 - o(1))$. \qed
\end{proof}

If we assume that $\mu$ is some constant, the expected runtime given by Theorem~\ref{thm:mmea-runtime} is at most $O(2^k)$ iterations or fitness evaluations, hence the task of finding a diverse population with the \mmea or the \mlea is easier than finding a $k$-vertex cover with methods, proposed in~\cite{DBLP:journals/tcs/BransonS23}, where an upper bound of $O(2^kn^2\log(n))$ iterations was shown for the \oea.

\section{Conclusion}
\label{sec:conclusion}

In this paper we showed the first example of a local optimum in the diversity optimization problem, from which it is impossible to escape by replacing only one individual in the population if we optimize diversity in an elitist way. This result illustrates that when optimizing in the space of populations, the $(\mu + 1)$ algorithms can be interpreted as local search in that space. To get a positive probability of finding an optimally diverse population in any iteration we have to be able to perform global changes on the population, which demands from us creating at least $\mu$ offspring. This idea brought us to the \mmea, which is an analogue of the \oea, where in role of individuals we have populations represented with bit strings of size $n\mu$.

The first signs of this result might have been seen in the previous empirical study~\cite{DBLP:conf/gecco/NeumannGYSCG023}. There it was shown that the amount of diversity of the obtained solutions in the context of constructing a wireless communication network can be increased in most of the cases by even slightly increasing the offspring population size. This also rises a question on how many individuals should we replace at once to escape such local optima. Creating the number of offspring which is at least the size of the parent population is definitely enough, and, as we have shown in this paper, might be effective when the parent population size is not too large. However, if we want to obtain large diverse populations, then counting on generating the whole optimal population in one iteration is not promising, and our hope would be on improving the diversity via replacing a small number of individuals per iteration.

Before we find an answer to the question on what the offspring population size should be, a lazy approach to such problems would be using variable population size in the \mlea. A good strategy for this might be choosing $\lambda$ according to the power-law distribution, which on the one hand gives us a decent probability to have a large population size, but also preserves a small expected cost of one iteration, as it was shown in~\cite{DBLP:journals/algorithmica/AntipovBD22}.

\textbf{Acknowledgements.} This work has been supported by the Australian Research Council through grants DP190103894 and FT200100536.


\end{document}